\documentclass{article}

\usepackage{arxiv}

\usepackage[utf8]{inputenc}
\usepackage[T1]{fontenc}
\usepackage{hyperref}
\usepackage{url}
\usepackage{booktabs}
\usepackage{amsmath,amssymb,amsfonts,mathtools,bm}
\usepackage{microtype}
\usepackage{graphicx}
\usepackage{natbib}
\usepackage{doi}
\usepackage{xcolor}

\DeclareMathOperator{\Var}{Var}

\usepackage{amsthm}
\newtheorem{theorem}{Theorem}
\newtheorem{proposition}[theorem]{Proposition}
\newtheorem{lemma}[theorem]{Lemma}
\newtheorem{corollary}[theorem]{Corollary}
\theoremstyle{remark}
\newtheorem{remark}{Remark}

\title{Expected Survival-Time Bounds for Robust Optimization Over Time under Isotropic Gaussian Dynamics}

\author{
	\href{https://orcid.org/0000-0003-3267-6753}{\includegraphics[scale=0.06]{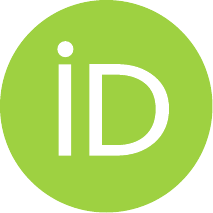}\hspace{1mm}Pavel Novoa-Hern{\'a}ndez} \\
	Dept.\ Computer Engineering and Systems\\
	Universidad de La Laguna\\
	Spain \\
	\texttt{pnovoahe@ull.edu.es}
}

\renewcommand{\shorttitle}{Survival-time bounds for ROOT}

\hypersetup{
pdftitle={Expected Survival-Time Bounds for Robust Optimization Over Time under Isotropic Gaussian Dynamics},
pdfsubject={60G50, 60G40, 90C59, 68T20},
pdfauthor={Pavel Novoa-Hernandez},
pdfkeywords={robust optimization over time, survival time, first-exit time, Gaussian random walk, deployment persistence, evolutionary dynamic optimization},
}

\begin{document}
\maketitle

\begin{abstract}
Robust Optimization Over Time (ROOT) is a recent branch of
evolutionary dynamic optimization that seeks solutions capable of remaining
effective across multiple consecutive environments. Unlike the traditional
track-the-moving-optimum (TMO) paradigm, which reoptimizes after every
environmental change, ROOT explicitly values persistence and aims to reduce
the need for frequent redeployments. Although the field has grown
considerably, most contributions remain algorithmic and empirical, leaving
several fundamental properties of ROOT problems poorly understood from a
theoretical perspective.
One such property is \emph{survival time}, defined as the number of future
environments in which a deployed solution continues to satisfy a prescribed
quality threshold. While survival time is frequently used as a measure of
temporal robustness, little is known about how its expected value depends on
environmental dynamics, deployment quality, or problem characteristics.
This paper addresses that gap for a fixed deployed solution under isotropic
Gaussian environmental dynamics. Modeling survival as a discrete first-exit
problem, we derive a rigorous lower bound and a computable multi-step upper
bound for the expected survival time. The analysis shows that expected
survival scales as $\Theta(\sigma^{-2})$ in slowly varying environments and
approaches its minimum value of one future change in high dimensions.
A comprehensive Monte Carlo study validates the theoretical predictions,
examines sensitivity to modeling assumptions and parameter uncertainty, and
illustrates how the proposed bounds can support deployment decisions after
optimization. Rather than introducing a new ROOT algorithm, the paper
provides an analytical characterization of deployment lifetime and a
decision-oriented framework for determining when a required deployment
horizon can be guaranteed, ruled out, or remains analytically unresolved.
\end{abstract}

\keywords{robust optimization over time \and survival time \and
first-exit time \and Gaussian random walk \and deployment persistence \and
evolutionary dynamic optimization}

\section{Introduction}\label{sec:introduction}

Evolutionary Dynamic Optimization (EDO) studies optimization problems whose objective function changes over time, typically represented as a sequence of environments, each inducing a different fitness landscape \citep{cruz2011optimization,nguyen2012evolutionary}. The dominant paradigm in this setting is \emph{tracking the moving optimum} (TMO), whereby an evolutionary algorithm repeatedly searches for the current optimum after each
environmental change and replaces the deployed solution with the newly found one. This strategy has been highly successful in benchmark studies and remains the prevailing perspective in dynamic optimization \citep{yazdani2021edosurvey}.

In many real applications, however, computing a better solution and deploying
it are fundamentally different operations. An operating schedule, a controller,
or a set of process parameters may already be running in production. Although a
new environment may admit a better solution, replacing the deployed one may
interrupt production, consume scarce resources, require human intervention, or
incur operational risks. Such costs, commonly referred to as
\emph{switching costs} \citep{yazdani2024review}, imply that continuously
redeploying the current optimum is not necessarily desirable. Consequently, the
decision problem extends beyond identifying the best solution for the present
environment and naturally raises a second question: \emph{how long can the
current solution remain acceptable before replacement becomes necessary?}

This observation motivated the development of \emph{Robust Optimization over
Time} (ROOT), whose objective is not merely to maximize instantaneous fitness
but to identify solutions that remain useful across successive environmental
changes \citep{yu2010robust,jin2013framework}. In contrast to classical robust
optimization, which typically studies uncertainty around a fixed optimization
problem, ROOT explicitly considers the temporal evolution of the problem
itself. Within this framework, \citet{jin2013framework} proposed constructing
surrogate objective functions by combining information from past environments
with predictions of future ones, allowing evolutionary algorithms to search for
solutions that balance present and future performance. Naturally, such
approaches rely on some degree of temporal continuity; when consecutive
environments become completely unrelated, historical information offers little
benefit and restarting the search may be unavoidable.

Different notions of temporal robustness have subsequently emerged.
\citet{fu2015robust} distinguish two complementary perspectives. The first
evaluates a solution through its \textit{average fitness} over a predetermined time
window. The second fixes an acceptability threshold and measures the number of
consecutive environments during which the solution remains acceptable before it
must be replaced. This duration, termed \textit{survival time}, directly reflects
the operational lifetime of a deployed solution. Since then, numerous ROOT
methods have sought solutions with long survival time by balancing robustness,
average fitness, and switching cost
\citep{guo2014twolayer,huang2017switching,huang2020decision}, estimating future
utility through sampling \citep{adam2019simple}, learning robust regions of the
search space \citep{yazdani2019learning,yazdani2023regions}, or incorporating
alternative search mechanisms such as differential evolution, reinforcement
learning, and data-stream models
\citep{guzman2020survival,lee2025rlpso,liu2025datastream}. These ideas have
also been successfully applied to engineering problems in which solution
lifetime is as important as instantaneous performance
\citep{fang2023modified,dong2024formation}.

Despite this growing body of work, existing research has concentrated almost
exclusively on \emph{how to search} for temporally robust solutions. Much less
attention has been devoted to understanding the behavior of the robustness
criterion itself once a solution has been selected. In particular, survival
time is usually evaluated empirically along realized environmental sequences,
whereas little is known about its expected behavior under stochastic
environmental dynamics. As a consequence, there currently exists no analytical
framework capable of answering questions such as how long a deployed solution
is expected to remain acceptable, how this lifetime depends on environmental
volatility, dimensionality, or initial solution quality, or under what
conditions a desired deployment horizon can be guaranteed.

This paper addresses this theoretical gap by studying survival time itself.
Specifically, we derive analytical bounds on the expected survival time of a
fixed solution under stochastic environmental dynamics and investigate how
deployment quality, environmental volatility, and problem dimension shape
its evolution. The resulting framework provides a theoretical
characterization of the deployment phase, complementing the predominantly
algorithmic ROOT literature with analytical insight into the persistence of
deployed solutions.

The remainder of the paper is organized as follows. Section
\ref{sec:model} introduces the stochastic model underlying the analysis.
Section~\ref{sec:results} presents the theoretical bounds and discusses their
properties. Section~\ref{sec:experiments} evaluates the theoretical predictions
through simulation and illustrates their implications for deployment decisions.
Finally, Section~\ref{sec:concluding-remarks} discusses the scope and limitations of
the proposed analysis and outlines future research directions.

\section{ROOT deployment model}\label{sec:model}

To study survival time analytically, it must first be expressed as a stochastic process. This section introduces the mathematical model used throughout the paper. After defining the deployment setting and the threshold-based notion of survival, we model environmental evolution as an isotropic Gaussian random walk and show that the resulting lifetime is a discrete first-exit problem. The section concludes by deriving the probabilistic identities required for the theoretical analysis presented in Section~\ref{sec:results}. 

\subsection{Deployment model and survival time}

The deployment setting is modeled by relating solution acceptability to the
distance between the deployed solution and the environmental optimum. Under a
threshold-based notion of robustness, a solution remains acceptable as long as
the optimum stays within a prescribed neighborhood of the deployed solution.
This geometric interpretation naturally recasts survival time as a discrete
first-exit problem.

Environments are indexed by \(t=0,1,\ldots\). Their changing parameter is the
location \(c_t\in\mathbb R^d\) of the environmental optimum. We assume that
this location follows an isotropic Gaussian random walk:
\begin{equation}
 c_{t+1}=c_t+\xi_t,\qquad
 \xi_t\stackrel{\mathrm{i.i.d.}}{\sim}\mathcal N(0,\sigma^2 I_d),
 \quad \sigma>0.
 \label{eq:random-walk}
\end{equation}
Here, \(\sigma\) controls the typical environmental change in each
coordinate. The Gaussian assumption provides a natural first-order model
when environmental changes arise from the accumulation of many small
perturbations with finite variance, while isotropy deliberately removes
directional preferences so that the analysis isolates the effects of
environmental volatility, dimensionality, and deployment error. The model
is therefore intended as a tractable baseline rather than a universal
description of ROOT environments.

Let \(x\) denote the single deployed solution. It is \emph{frozen}, meaning
that the optimizer may have produced \(x\), but \(x\) is not updated during
the lifetime analyzed below. In environment \(t\), consider the spherical
fitness
\begin{equation}
 f^{(t)}(x)=-\lVert x-c_t\rVert^2.
 \label{eq:fitness}
\end{equation}
The quality requirement \(f^{(t)}(x)\ge V\), for \(V\le0\), is equivalent to
\(\lVert x-c_t\rVert\le\delta\) with \(\delta=\sqrt{-V}\). More generally,
\(\delta\) is the tolerance induced by a monotone radial fitness threshold.
It is fixed by the application or decision maker, and a larger \(\delta\)
admits more performance degradation before replacement.

The deployment error left by the optimizer at deployment is
\(\epsilon_0=\lVert x-c_0\rVert\). We initially require
\(\epsilon_0<\delta\); otherwise the solution is unacceptable before any
future change occurs. Following the threshold-based definition of
\citet{fu2015robust}, the future survival time is
\begin{equation}
 \tau=\inf\{t\ge1:\lVert x-c_t\rVert>\delta\}.
 \label{eq:survival}
\end{equation}
Thus \(\tau=1\) means that the first environmental change makes \(x\)
unacceptable, while \(\tau=10\) means that the first threshold violation
occurs at the tenth change. Our convention excludes the already observed
environment \(t=0\), so \(\tau\ge1\). It differs by this indexing choice from
definitions that count the deployment environment itself, but represents the
same operational duration after deployment. In the terminology of
\citet{yazdani2024review}, this is a \(\mathrm{ROOT}^{S}_{Q}\) lifetime
analysis. It does not yet choose the next solution after failure and therefore
does not constitute a complete ROOT policy.

\subsection{Distance process and probabilistic foundations} 

The analysis is naturally expressed in terms of the displacement
\(Y_t=x-c_t\). Its squared norm,
\(D_t=\lVert Y_t\rVert^2\), measures the proximity of the deployed solution to
the acceptability boundary. Define the initial squared safety margin as
\[
\Delta=\delta^2-\epsilon_0^2>0,
\]
which quantifies the remaining tolerance before the threshold is reached. Expanding \(D_{t+1}\) yields
\begin{align}
 \mathbb E[D_{t+1}-D_t\mid\mathcal F_t]&=d\sigma^2,
 \label{eq:drift}\\
 \Var(D_{t+1}-D_t\mid\mathcal F_t)&=4D_t\sigma^2+2d\sigma^4,
 \label{eq:variance}
\end{align}
where \(\mathcal F_t=\sigma(c_0,\ldots,c_t)\). 
The variance expression follows by expanding the square and using the symmetry of the Gaussian increments; the cross terms vanish in expectation.

Equation~\eqref{eq:drift} shows that the squared distance increases, on
average, by \(d\sigma^2\) per environmental change. Consequently, for fixed
per-coordinate variance, the expected outward drift grows linearly with the
dimension. Equation~\eqref{eq:variance} characterizes the corresponding
conditional fluctuations, which depend on both the current displacement and the environmental variance.

The drift identity forms the basis of the lower bound. The upper bound
requires controlling the probability of remaining inside the acceptance
region over multiple environmental changes. To this end, we consider blocks
of \(m\) steps and define
\begin{align}
 a_m&=\frac{\delta^2}{m\sigma^2},&
 \lambda_m(r)&=\frac{r^2}{m\sigma^2},\label{eq:am-lambda}\\
 q_m&=F_{\chi^2_d}(a_m),&
 r_m(r)&=F_{\chi'^2_d(\lambda_m(r))}(a_m),\label{eq:q-r}
\end{align}
where \(F_{\chi^2_d}\) and \(F_{\chi'^2_d(\lambda)}\) are central and non-central chi-square distribution functions. \(r_m(r)\) is the probability that a Gaussian walk starting at
distance \(r\) lies inside the acceptance ball after \(m\) steps. Since the
process may leave and subsequently re-enter the ball, \(r_m(r)\) is an
endpoint probability rather than a survival probability.

The distinction between endpoint and path events is fundamental. Survival
through an entire block implies that the endpoint remains inside the
acceptance region, but the converse need not hold. Replacing the path event
by the corresponding endpoint event yields the following upper bound.

\begin{lemma}[Endpoint and block bounds]\label{lem:block}
For every \(y\) with \(\lVert y\rVert=r\le\delta\),
\begin{equation}
 \Pr_y(\tau>m)\le r_m(r)\le q_m<1.
 \label{eq:block-bound}
\end{equation}
Here \(q_m\) and \(r_m(r)\) are defined in \eqref{eq:q-r}; they depend on the block length \(m\), the dimension \(d\), the tolerance \(\delta\), and the noise variance \(\sigma^2\). 
Moreover, from any point in the acceptance ball,
\begin{equation}
 \Pr_y(\tau>km)\le q_m^k,\qquad k=0,1,\ldots.
 \label{eq:geometric-blocks}
\end{equation}
\end{lemma}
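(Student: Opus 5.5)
The plan is to prove the three inequalities in \eqref{eq:block-bound} in order, and then to iterate the block bound with the Markov property to obtain \eqref{eq:geometric-blocks}.

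\textbf{First inequality.} I will work with the displacement process $Y_t=x-c_t$ started at $Y_0=y$. It is a Gaussian random walk with increments $-\xi_t$, so $Y_m=y-\sum_{t<m}\xi_t\sim\mathcal N(y,m\sigma^2 I_d)$. The event $\{\tau>m\}$ requires $\lVert Y_t\rVert\le\delta$ for every $t=1,\dots,m$, and in particular for $t=m$. Hence
\[
\Pr_y(\tau>m)\le\Pr_y(\lVert Y_m\rVert\le\delta)=\Pr\bigl(\lVert Y_m\rVert^2/(m\sigma^2)\le a_m\bigr).
\]
The variable $\lVert Y_m\rVert^2/(m\sigma^2)$ is the squared norm of an $\mathcal N(y/(\sqrt m\sigma),I_d)$ vector. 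It is therefore noncentral $\chi'^2_d$ with noncentrality $r^2/(m\sigma^2)=\lambda_m(r)$. This gives exactly $r_m(r)$.

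\textbf{Second inequality.} This step needs $r_m(r)\le q_m$, i.e.\ that the probability that $\mathcal N(\mu,I_d)$ lands in a centered ball is maximized at $\mu=0$. I would justify this with Anderson's inequality: the standard Gaussian density is symmetric and log-concave, and the ball is symmetric and convex, so $\Pr(Z+\mu\in B)\le\Pr(Z\in B)$. An alternative route is the standard fact that $F_{\chi'^2_d(\lambda)}(a)$ is nonincreasing in $\lambda$, for instance via the Poisson mixture representation. The Poisson mixture expresses the noncentral cdf as $\sum_j e^{-\lambda/2}(\lambda/2)^j/j!\,F_{\chi^2_{d+2j}}(a)$. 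Since $F_{\chi^2_{d+2j}}(a)$ decreases in $j$, and the Poisson law is stochastically increasing in $\lambda$, monotonicity follows. The case $\lambda=0$ gives $q_m$.

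\textbf{Strict inequality.} The bound $q_m<1$ is immediate because $a_m<\infty$ and the $\chi^2_d$ distribution has unbounded support.

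\textbf{Geometric blocks.} I will argue by induction on $k$; the case $k=0$ is trivial. On $\{\tau>km\}$, the position $Y_{km}$ lies in the closed ball of radius $\delta$. By the Markov property of the walk at the deterministic time $km$,
\[
\Pr_y(\tau>(k+1)m)=\mathbb E_y\bigl[\mathbf 1\{\tau>km\}\,\Pr_{Y_{km}}(\tau>m)\bigr].
\]
This uses that survival on the later block is the event that the shifted path stays in the ball for steps $1,\dots,m$. Applying the block bound uniformly over starting points in the ball gives $\le q_m\Pr_y(\tau>km)\le q_m^{k+1}$.

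\textbf{Main obstacle.} The only nontrivial point is the monotonicity $r_m(r)\le q_m$, i.e.\ the noncentral-versus-central comparison. I would cite Anderson's inequality, or give the Poisson-mixture argument above, rather than attempt a direct computation. A minor care point is that the induction needs the block bound for all $\lVert y\rVert\le\delta$, including boundary points. That is fine because the first two inequalities hold for any $r\le\delta$.
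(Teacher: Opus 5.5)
Your proposal is correct and follows the paper's proof essentially step for step. Endpoint containment gives $\Pr_y(\tau>m)\le r_m(r)$, Anderson's inequality gives $r_m(r)\le q_m$, and conditioning at the deterministic times $m,2m,\ldots$ with the uniform block bound yields the geometric tail; your Poisson-mixture argument for the monotonicity of $F_{\chi'^2_d(\lambda)}(a)$ in $\lambda$ and your explicit justification of $q_m<1$ are valid additions that the paper leaves implicit.
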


\begin{proof}
Survival through step \(m\) implies that the endpoint remains in the ball, so
\[
 \Pr_y(\tau>m)\le
 \Pr\!\left(\lVert y-\textstyle\sum_{j=0}^{m-1}\xi_j\rVert\le\delta\right)
 =r_m(r).
\]
The centered isotropic Gaussian density is symmetric and unimodal, and the
ball is symmetric and convex. Anderson's inequality therefore states that its
Gaussian measure is maximized when the translating vector is zero
\citep{anderson1955integral}; hence \(r_m(r)\le r_m(0)=q_m\).
Conditioning successively at times \(m,2m,\ldots\) and applying the same
uniform bound proves \eqref{eq:geometric-blocks}.
\end{proof}

The martingale argument used below requires the exit time and the exit state
to be integrable. The geometric tail established in
Lemma~\ref{lem:block} provides the necessary control.

\begin{proposition}[Exit, integrability, and Wald identity]\label{prop:wald}
The exit time is almost surely finite, \(\mathbb E[\tau]<\infty\),
\(\mathbb E[D_\tau]<\infty\), and
\begin{equation}
 d\sigma^2\mathbb E[\tau]=\mathbb E[D_\tau]-\epsilon_0^2.
 \label{eq:wald}
\end{equation}
\end{proposition}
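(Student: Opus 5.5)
The plan is to combine the geometric tail bound of Lemma~\ref{lem:block} with optional stopping for the compensated process
\[
M_t = D_t - \epsilon_0^2 - d\sigma^2 t ,
\]
which by \eqref{eq:drift} is an $(\mathcal F_t)$-martingale with $M_0=0$.

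\textbf{Finiteness and $\mathbb E[\tau]<\infty$.} Fix any block length $m\ge1$, say $m=1$. The starting point satisfies $\epsilon_0<\delta$, so \eqref{eq:geometric-blocks} applies and gives $\Pr(\tau>km)\le q_m^k$ with $q_m<1$. Letting $k\to\infty$ shows $\tau<\infty$ almost surely. The tail-sum formula then gives
\[
\mathbb E[\tau]=\sum_{n\ge0}\Pr(\tau>n)\le m\sum_{k\ge0}q_m^k=\frac{m}{1-q_m}<\infty .
\]
In fact every moment of $\tau$ is finite, since its tail is geometric.

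\textbf{Integrability of $D_\tau$.} Work with $Y_t=Y_0-\sum_{j<t}\xi_j$. On $\{\tau=n\}$ we have $\lVert Y_{n-1}\rVert\le\delta$, so $\lVert Y_\tau\rVert\le\delta+\lVert\xi_{\tau-1}\rVert$. Hence
\[
D_\tau\le 2\delta^2+2\sum_{n\ge1}\lVert\xi_{n-1}\rVert^2\mathbf 1_{\{\tau\ge n\}} .
\]
The event $\{\tau\ge n\}$ lies in $\mathcal F_{n-1}$ and is independent of $\xi_{n-1}$. Taking expectations therefore gives
\[
\mathbb E[D_\tau]\le 2\delta^2+2d\sigma^2\,\mathbb E[\tau]<\infty .
\]

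\textbf{Wald identity.} I will apply optional stopping at the bounded time $\tau\wedge n$, which gives
\[
\mathbb E[D_{\tau\wedge n}]=\epsilon_0^2+d\sigma^2\,\mathbb E[\tau\wedge n].
\]
The right side converges to $\epsilon_0^2+d\sigma^2\mathbb E[\tau]$ by monotone convergence. On the left, $D_{\tau\wedge n}\to D_\tau$ almost surely, and I need domination to pass to the limit. For $n<\tau$ we have $D_n\le\delta^2$, and at $n\ge\tau$ the value is $D_\tau$. So $D_{\tau\wedge n}\le \delta^2+D_\tau$, which is integrable by the previous step. Dominated convergence then yields \eqref{eq:wald}.

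\textbf{Main obstacle.} The delicate point is this domination step, namely showing that the overshoot $D_\tau$ is integrable with a bound uniform in $n$. The argument above handles it by controlling the overshoot through the single jump $\xi_{\tau-1}$ and using the predictability of $\{\tau\ge n\}$. This avoids any appeal to the variance formula \eqref{eq:variance}.
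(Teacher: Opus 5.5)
Your proof is correct and follows the paper's argument: the geometric tail from Lemma~\ref{lem:block} with $m=1$, the overshoot bound $D_\tau\le(\delta+\lVert\xi_{\tau-1}\rVert)^2$, and optional stopping at $\tau\wedge n$, closed by monotone convergence on the right and domination by $\delta^2+D_\tau$ on the left (the paper uses $\max\{\delta^2,D_\tau\}$). The only difference is how you bound $\mathbb E\lVert\xi_{\tau-1}\rVert^2$: the paper applies Cauchy--Schwarz term by term and sums $\Pr(\tau\ge k)^{1/2}$ using the geometric tail, whereas you use that $\{\tau\ge n\}\in\mathcal F_{n-1}$ is independent of $\xi_{n-1}$, which needs only $\mathbb E[\tau]<\infty$ and gives the explicit bound $\mathbb E[D_\tau]\le2\delta^2+2d\sigma^2\mathbb E[\tau]$.
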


\begin{proof}
Lemma~\ref{lem:block} with \(m=1\) yields
\(\Pr(\tau>k)\le q_1^k\), so \(\tau\) is finite almost surely and integrable.
To justify integrability at exit, note pathwise that
\[
 D_\tau\le(\delta+\lVert\xi_{\tau-1}\rVert)^2.
\]
For \(s=1,2\), Cauchy--Schwarz and the geometric tail give
\[
 \mathbb E[\lVert\xi_{\tau-1}\rVert^s]
 \le \sum_{k\ge1}
 \bigl(\mathbb E\lVert\xi_0\rVert^{2s}\bigr)^{1/2}
 \Pr(\tau\ge k)^{1/2}<\infty.
\]
Thus \(D_\tau\) is integrable independently of the identity to be proved.

The process \(M_t=D_t-d\sigma^2t\) is a martingale by
\eqref{eq:drift}. Optional stopping at the bounded time \(\tau\wedge n\)
gives
\[
 \mathbb E[D_{\tau\wedge n}]-\epsilon_0^2
 =d\sigma^2\mathbb E[\tau\wedge n].
\]
Monotone convergence applies on the right. On the left,
\(D_{\tau\wedge n}\le\max\{\delta^2,D_\tau\}\), so dominated convergence
establishes \eqref{eq:wald}.
\end{proof}

Equation~\eqref{eq:wald} links the expected survival time to the expected squared displacement at exit. This identity constitutes the key analytical bridge between the stochastic deployment model and the survival-time bounds derived in the next section.

\section{Analytical survival-time bounds}\label{sec:results}

The stochastic formulation of Section~\ref{sec:model} reduces survival-time
analysis to the study of a first-exit process. Building on the drift identity
of Proposition~\ref{prop:wald} and the block probability estimate of
Lemma~\ref{lem:block}, we derive computable lower and upper bounds for the
expected survival time. We then examine the dependence of these bounds on the
deployment error, environmental volatility, and problem dimension, and
discuss the conditions under which they provide useful guidance for
deployment decisions.

\subsection{A lower bound from the safety margin}

The lower bound follows directly from Proposition~\ref{prop:wald}.
Exit occurs only after the squared distance exceeds the acceptance threshold;
therefore, the expected exit time is bounded below by the initial squared
safety margin divided by the expected one-step increase in squared distance.
The discrete nature of the process further implies that every initially
acceptable deployment survives at least one environmental change.

\begin{theorem}[Lower bound]\label{thm:lower}
Under the model in Section~\ref{sec:model},
\begin{equation}
 \mathbb E[\tau]\ge
 L(\epsilon_0):=
 \max\left\{1,\frac{\delta^2-\epsilon_0^2}{d\sigma^2}\right\}.
 \label{eq:lower}
\end{equation}
\end{theorem}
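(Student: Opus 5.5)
The bound \eqref{eq:lower} is the maximum of two terms, so I will prove $\mathbb E[\tau]\ge 1$ and $\mathbb E[\tau]\ge(\delta^2-\epsilon_0^2)/(d\sigma^2)$ separately.

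The first inequality is immediate from the definition \eqref{eq:survival}. The infimum there is taken over $t\ge1$, so $\tau\ge1$ pointwise. This holds even when $\tau$ might be infinite, and Proposition~\ref{prop:wald} already rules that out almost surely. Hence $\mathbb E[\tau]\ge1$.

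For the second inequality I use the Wald identity \eqref{eq:wald} of Proposition~\ref{prop:wald}. That proposition also supplies the integrability of $\tau$ and $D_\tau$, so the identity holds with finite quantities and can be rearranged to
\[
 \mathbb E[\tau]=\frac{\mathbb E[D_\tau]-\epsilon_0^2}{d\sigma^2}.
\]
It then suffices to show $\mathbb E[D_\tau]\ge\delta^2$. This follows pathwise. Since $\tau<\infty$ almost surely, the definition of $\tau$ gives $\lVert x-c_\tau\rVert>\delta$ on that event, so $D_\tau>\delta^2$ almost surely. Taking expectations gives $\mathbb E[D_\tau]\ge\delta^2$. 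Substituting this and using $\Delta=\delta^2-\epsilon_0^2$ yields $\mathbb E[\tau]\ge\Delta/(d\sigma^2)$.

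Combining the two inequalities gives \eqref{eq:lower}. The only delicate point is the validity of the Wald identity: optional stopping at an unbounded time and the integrability of the overshoot $D_\tau$. Both are already handled in Proposition~\ref{prop:wald} through the geometric tail of Lemma~\ref{lem:block}, so I expect no real obstacle here. It may be worth remarking that the inequality $\mathbb E[D_\tau]\ge\delta^2$ is strict, since $D_\tau>\delta^2$ almost surely. The bound discards only the expected overshoot $\mathbb E[D_\tau]-\delta^2$, and this overshoot is what the upper bound must control.
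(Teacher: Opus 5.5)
Your proposal is correct and matches the paper's proof: the floor $1$ comes from $\tau\ge1$. The second term comes from substituting the almost-sure inequality $D_\tau>\delta^2$ into the Wald identity \eqref{eq:wald} of Proposition~\ref{prop:wald}, which already supplies the integrability and optional-stopping justification.
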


\begin{proof}
The discrete floor follows from \(\tau\ge1\). At exit,
\(D_\tau>\delta^2\) almost surely. Substitution into
\eqref{eq:wald} yields the second term.
\end{proof}

Equation~\eqref{eq:lower} admits a direct operational interpretation. Given a
required deployment horizon \(h\), the condition
\(L(\epsilon_0)\ge h\) provides a sufficient guarantee on the expected
survival time. The bound also separates the contributions of the three model
parameters. Increasing the deployment quality reduces the deployment error
\(\epsilon_0\), increasing the acceptance threshold enlarges the safety margin
\(\delta\), whereas greater environmental volatility or higher dimensionality
reduces the guaranteed lifetime through the factor \(d\sigma^2\). Among these,
only \(\epsilon_0\) is directly controlled by the optimization algorithm,
while the remaining quantities are determined by the application and the
environment.

\subsection{A block-based upper bound}

Unlike the lower bound, the upper bound cannot be obtained directly from the
drift identity because survival depends on the entire trajectory rather than
only on the exit state. The block estimate of
Lemma~\ref{lem:block} provides the additional ingredient required to control
the survival-time tail. By partitioning time into blocks of \(m\) successive
environmental changes, we obtain the following family of computable upper
bounds. Here, \(m\) is an analytical parameter introduced solely for the
derivation and does not alter the underlying stochastic model.

\begin{theorem}[Multi-step upper bound]\label{thm:upper}
For every integer \(m\ge1\),
\begin{equation}
 \mathbb E[\tau]\le
 U_m(\epsilon_0):=
 m\left(1+\frac{r_m(\epsilon_0)}{1-q_m}\right).
 \label{eq:upper-m}
\end{equation}
Consequently,
\begin{equation}
 \mathbb E[\tau]\le
 U(\epsilon_0):=\inf_{m\in\mathbb N}U_m(\epsilon_0).
 \label{eq:upper}
\end{equation}
\end{theorem}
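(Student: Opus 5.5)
We write $\mathbb E[\tau]$ as a sum of tail probabilities grouped into blocks of length $m$, and bound the tail of each block using Lemma~\ref{lem:block}.

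\textbf{Step 1: tail-sum decomposition.} Since $\tau$ is a nonnegative integer and is integrable by Proposition~\ref{prop:wald},
\[
 \mathbb E[\tau]=\sum_{n\ge0}\Pr(\tau>n)=\sum_{k\ge0}\sum_{j=0}^{m-1}\Pr(\tau>km+j).
\]
Because $n\mapsto\Pr(\tau>n)$ is nonincreasing, each inner sum is at most $m\Pr(\tau>km)$. Hence
\[
 \mathbb E[\tau]\le m\sum_{k\ge0}\Pr(\tau>km)=m\Bigl(1+\sum_{k\ge1}\Pr(\tau>km)\Bigr).
\]
The $k=0$ term is bounded trivially by $1$; this is the source of the leading $m$ in $U_m$.

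\textbf{Step 2: bounding $\Pr(\tau>km)$ for $k\ge1$.} The first block is handled with the start-dependent endpoint bound $r_m(\epsilon_0)$, and every later block with the uniform bound $q_m$. Apply the Markov property at times $m,2m,\ldots,(k-1)m$. On $\{\tau>(k-1)m\}$ the state $Y_{(k-1)m}$ lies in the closed ball of radius $\delta$. Applying \eqref{eq:block-bound} from that state gives
\[
 \Pr\bigl(\tau>km\mid\mathcal F_{(k-1)m}\bigr)\le q_m .
\]
Iterating down to time $m$ yields $\Pr(\tau>km)\le q_m^{k-1}\Pr(\tau>m)$. Lemma~\ref{lem:block}, applied at the initial point $Y_0$ with $\lVert Y_0\rVert=\epsilon_0\le\delta$, gives $\Pr(\tau>m)\le r_m(\epsilon_0)$. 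Therefore
\[
 \Pr(\tau>km)\le r_m(\epsilon_0)\,q_m^{k-1},\qquad k\ge1 .
\]

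\textbf{Step 3: summing.} Since $q_m<1$ by Lemma~\ref{lem:block}, the series is geometric:
\[
 \sum_{k\ge1}r_m(\epsilon_0)\,q_m^{k-1}=\frac{r_m(\epsilon_0)}{1-q_m}.
\]
This gives \eqref{eq:upper-m}. Since \eqref{eq:upper-m} holds for every $m\in\mathbb N$, taking the infimum over $m$ gives \eqref{eq:upper}.

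\textbf{Main obstacle.} The only delicate point is the conditioning in Step 2. One must check that survival past $(k-1)m$ places the state in the closed ball, which is the region where Lemma~\ref{lem:block} applies. One must also check that the future increments are independent of $\mathcal F_{(k-1)m}$, so the uniform bound $q_m$ can be applied conditionally. Both follow from the i.i.d.\ increment structure \eqref{eq:random-walk} and the fact that $\tau$ is a stopping time.
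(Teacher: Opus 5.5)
Your proof is correct and is essentially the paper's argument. The paper first uses the same blocked tail sum with \eqref{eq:geometric-blocks} to get $\sup_{\lVert y\rVert\le\delta}\mathbb E_y[\tau]\le m/(1-q_m)$, then splits $\mathbb E[\tau]=\mathbb E[\min(\tau,m)]+\mathbb E[(\tau-m)_+]$ and applies the Markov property at time $m$ together with $\Pr(\tau>m)\le r_m(\epsilon_0)$; you fold both stages into one blocked sum via $\Pr(\tau>km)\le r_m(\epsilon_0)\,q_m^{k-1}$, which gives the identical bound (in Step 2 the conditional bound $\Pr(\tau>km\mid\mathcal F_{(k-1)m})\le q_m$ should carry the indicator of $\{\tau>(k-1)m\}$, which is what makes your iteration valid).
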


\begin{proof}
For a walk beginning at any point of the acceptance ball, grouping the
tail-sum formula into blocks and using \eqref{eq:geometric-blocks} gives
\[
 \sup_{\lVert y\rVert\le\delta}\mathbb E_y[\tau]
 \le m\sum_{k\ge0}q_m^k=\frac{m}{1-q_m}.
\]
For the specified initial displacement,
\[
 \mathbb E[\tau]
 =\mathbb E[\min(\tau,m)]+\mathbb E[(\tau-m)_+].
\]
The first term is at most \(m\). Conditional on survival through \(m\), the
remaining expected lifetime is at most \(m/(1-q_m)\), while
\(\Pr(\tau>m)\le r_m(\epsilon_0)\) by Lemma~\ref{lem:block}. This proves
\eqref{eq:upper-m}; taking the infimum proves \eqref{eq:upper}.
\end{proof}

Equation~\eqref{eq:upper} complements the lower bound by providing a
necessary condition for achieving a prescribed expected deployment horizon.
Given a target horizon \(h\), the inequality
\(U(\epsilon_0)<h\) rules out its attainment under the assumed stochastic
model. Together, the bounds partition the decision into three regions:
guaranteed feasibility (\(L(\epsilon_0)\ge h\)),
guaranteed infeasibility (\(U(\epsilon_0)<h\)), and an intermediate regime
\(L(\epsilon_0)<h\le U(\epsilon_0)\) where the available analytical
information is insufficient to determine the outcome. This indeterminate
region naturally identifies situations in which refined analysis, improved
parameter estimation, or simulation may be justified.

\begin{remark}
  The optimization in \eqref{eq:upper} is one-dimensional over the positive
  integers. Since \(U_m\ge m\), the search can be terminated once
  \(m\) exceeds the smallest upper bound found so far. Consequently,
  evaluating \(U(\epsilon_0)\) requires neither trajectory simulation nor
  continuous optimization.
  \end{remark}

\subsection{Asymptotic regimes}
The lower and upper bounds jointly characterize the asymptotic behavior of
the expected survival time. Two regimes are of particular interest: vanishing
environmental volatility and increasing problem dimension under fixed
environmental volatility. These regimes are characterized by the following
two propositions.

\begin{proposition}[Low-noise order]\label{prop:low-noise}
Fix \(d,\delta\), and \(\rho=\epsilon_0/\delta<1\). As
\(\sigma\downarrow0\),
\begin{equation}
 \mathbb E[\tau]=\Theta(\sigma^{-2}).
 \label{eq:low-noise}
\end{equation}
The lower-bound scale is
\((1-\rho^2)\delta^2/(d\sigma^2)\).
\end{proposition}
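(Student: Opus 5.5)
We prove the two directions separately, taking the lower bound from Theorem~\ref{thm:lower} and the upper bound from Theorem~\ref{thm:upper} with a block length chosen to grow like \(\sigma^{-2}\).

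\emph{Lower bound.} With \(\epsilon_0=\rho\delta\), Theorem~\ref{thm:lower} gives
\[
\mathbb E[\tau]\ge \frac{\delta^2-\epsilon_0^2}{d\sigma^2}=\frac{(1-\rho^2)\delta^2}{d\sigma^2}.
\]
Since \(\rho<1\) and \(d,\delta\) are fixed, this is \(\Omega(\sigma^{-2})\). It also exhibits the stated lower-bound scale.

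\emph{Upper bound.} Fix a constant \(\kappa>0\), to be chosen later, and set
\[
m=m(\sigma)=\lceil \kappa\delta^2/\sigma^2\rceil ,
\]
so that \(m\ge1\) and \(m\sigma^2\to\kappa\delta^2\) as \(\sigma\downarrow0\). Then, from \eqref{eq:am-lambda},
\[
a_m=\frac{\delta^2}{m\sigma^2}\to a^\ast:=\frac1\kappa ,
\qquad\text{and}\qquad a_m\le a^\ast .
\]
By \eqref{eq:q-r} and monotonicity of the central chi-square CDF,
\[
q_m=F_{\chi^2_d}(a_m)\le F_{\chi^2_d}(1/\kappa)=:\bar q<1 .
\]
The bound \(\bar q<1\) holds because the \(\chi^2_d\) distribution has unbounded support. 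By Lemma~\ref{lem:block}, \(r_m(\epsilon_0)\le q_m\le 1\). Theorem~\ref{thm:upper} then yields
\[
\mathbb E[\tau]\le U_m(\epsilon_0)\le m\Bigl(1+\frac{1}{1-\bar q}\Bigr)
\le \Bigl(\frac{\kappa\delta^2}{\sigma^2}+1\Bigr)\Bigl(1+\frac1{1-\bar q}\Bigr)=O(\sigma^{-2}).
\]
Any fixed \(\kappa\), for instance \(\kappa=1\), suffices. Combining the two directions gives \(\Theta(\sigma^{-2})\).

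The step needing care is making the upper constant uniform in \(\sigma\). Because \(\lceil\cdot\rceil\) makes \(a_m\) vary with \(\sigma\), the argument uses the one-sided bound \(a_m\le 1/\kappa\) rather than the limit \(a_m\to1/\kappa\). We also do not track \(r_m(\epsilon_0)\) precisely, since a crude bound suffices for the order. If a sharper constant were desired, we would use \(\lambda_m(\epsilon_0)\to\rho^2\kappa\), so that \(r_m(\epsilon_0)\to F_{\chi'^2_d(\rho^2\kappa)}(1/\kappa)\) by continuity of the non-central chi-square CDF in \((\lambda,a)\), and then optimize over \(\kappa\).
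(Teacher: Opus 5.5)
Your proof is correct and follows the paper's argument. The lower order comes from Theorem~\ref{thm:lower}, and the upper order comes from Theorem~\ref{thm:upper} with block length $m=\lceil\kappa\delta^2/\sigma^2\rceil$ (the paper takes $\kappa=1/d$), bounding $r_m(\epsilon_0)$ by one and $1-q_m$ away from zero; your one-sided bound $a_m\le 1/\kappa$, used in place of the paper's limit $a_{m_\sigma}\to d$, is only a minor tightening that makes the constant valid for every $\sigma$ rather than for sufficiently small $\sigma$.
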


\begin{proof}
The lower order follows from Theorem~\ref{thm:lower}. For the upper order,
choose \(m_\sigma=\lceil\delta^2/(d\sigma^2)\rceil\) in
Theorem~\ref{thm:upper}. Then \(m_\sigma=\Theta(\sigma^{-2})\),
\(a_{m_\sigma}\to d\), and
\(\lambda_{m_\sigma}(\epsilon_0)\to\rho^2d\). Since
\(1-F_{\chi^2_d}(d)>0\), convergence implies that the actual denominator
\(1-F_{\chi^2_d}(a_{m_\sigma})\) is bounded below by a positive constant for
all sufficiently small \(\sigma\). The numerator distribution function is at
most one. Hence \(U_{m_\sigma}=O(\sigma^{-2})\).
\end{proof}

Proposition~\ref{prop:low-noise} establishes the exact asymptotic order,
whereas the multiplicative constant remains determined by the deployment
error, acceptance threshold, and problem dimension. The finite-\(\sigma\)
behavior is therefore governed by these parameters rather than by the
asymptotic scaling itself.

A second asymptotic regime arises when the problem dimension increases while
the environmental volatility remains fixed. Under the isotropic model, the
expected squared displacement accumulated in a single environmental change
grows linearly with \(d\), causing the acceptance region to be exited
increasingly rapidly.

\begin{proposition}[High-dimensional collapse]\label{prop:high-d}
For fixed \(\epsilon_0<\delta\), \(\delta\), and \(\sigma>0\),
\begin{equation}
 \lim_{d\to\infty}\mathbb E[\tau]=1.
 \label{eq:high-d}
\end{equation}
The convergence is exponentially fast for all sufficiently large \(d\).
\end{proposition}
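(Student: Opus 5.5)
Since \(\tau\ge1\), it suffices to show \(\mathbb E[\tau]-1\to0\), and to do so exponentially fast. I would begin with the tail-sum formula
\[
\mathbb E[\tau]=\sum_{k\ge0}\Pr(\tau>k)=1+\sum_{k\ge1}\Pr(\tau>k),
\]
and reduce everything to the one-step survival probability. Lemma~\ref{lem:block} with \(m=1\) gives the uniform bound \(\Pr_y(\tau>k)\le q_1^k\) from any point of the acceptance ball, where \(q_1=F_{\chi^2_d}(\delta^2/\sigma^2)\). Hence
\[
1\le\mathbb E[\tau]\le 1+\sum_{k\ge1}q_1^k=1+\frac{q_1}{1-q_1}.
\]
The same inequality also follows from Theorem~\ref{thm:upper} with \(m=1\), since \(U_1(\epsilon_0)=1+r_1(\epsilon_0)/(1-q_1)\) and \(r_1\le q_1\). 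The whole statement therefore reduces to showing that \(q_1=\Pr(\chi^2_d\le\delta^2/\sigma^2)\) tends to zero exponentially fast in \(d\) while \(a_1=\delta^2/\sigma^2\) stays fixed.

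This is the only real step, and it is a lower-tail chi-square estimate. I would use the Chernoff bound: for any \(\theta>0\),
\[
\Pr(\chi^2_d\le a)\le e^{\theta a}\,\mathbb E\bigl[e^{-\theta\chi^2_d}\bigr]=e^{\theta a}(1+2\theta)^{-d/2}.
\]
Fixing, say, \(\theta=1/2\) gives
\[
q_1\le e^{a_1/2}\,2^{-d/2},
\]
which decays geometrically in \(d\) because \(a_1\) does not depend on \(d\). A cruder alternative is to bound the density of \(\chi^2_d\) on \([0,a_1]\), which yields \(q_1\le a_1^{d/2}/\bigl(2^{d/2}\Gamma(d/2+1)\bigr)\), a superexponential decay. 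Either route works.

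Finally, once \(d\) is large enough that \(q_1\le1/2\), we get
\[
0\le\mathbb E[\tau]-1\le 2q_1\le 2e^{a_1/2}2^{-d/2}.
\]
This proves the limit and shows the convergence is exponentially fast for all sufficiently large \(d\), which is exactly the qualifier in the statement. The argument is uniform in \(\epsilon_0\), so the hypothesis \(\epsilon_0<\delta\) is only needed to make \(\tau\) well defined with \(\tau\ge1\). The Chernoff step is the only place requiring care, and it amounts to a one-line computation.
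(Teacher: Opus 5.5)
Your proof is correct and follows the paper's argument: the paper also takes \(m=1\) in Theorem~\ref{thm:upper} to get \(1\le\mathbb E[\tau]\le 1/(1-q_1)\), and then shows that \(q_1=\Pr(\chi^2_d\le\delta^2/\sigma^2)\) is exponentially small in \(d\). The only difference is the tail estimate: the paper uses the Laurent--Massart inequality to get \(q_1\le\exp(-9d/64)\) once \(\delta^2/\sigma^2\le d/4\), whereas your Chernoff bound \(q_1\le e^{a_1/2}2^{-d/2}\) is self-contained and holds for every \(d\), at the cost of the prefactor \(e^{a_1/2}\).
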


\begin{proof}
Set \(m=1\). Since \(r_1(\epsilon_0)\le q_1\),
Theorem~\ref{thm:upper} gives
\[
 1\le\mathbb E[\tau]\le\frac{1}{1-q_1},\qquad
 q_1=\Pr\!\left(\chi^2_d\le\frac{\delta^2}{\sigma^2}\right).
\]
For fixed \(\delta/\sigma\), the central chi-square lower tail tends to zero.
More precisely, once \(d\) is large enough that
\(\delta^2/\sigma^2 \le d/4\), the Laurent--Massart inequality \citep{laurent2000adaptive} yields
\(q_1\le\exp(-9d/64)\). Therefore
\(\mathbb E[\tau]-1\le q_1/(1-q_1)\), which is exponentially small.
\end{proof}

Proposition~\ref{prop:high-d} strengthens the dimensional dependence already
suggested by the lower bound. Rather than indicating a \(1/d\) scaling, it
shows that the expected survival time converges exponentially fast to its
minimum attainable value of one environmental change. Under the isotropic
model, this collapse is driven entirely by the environmental dynamics and
cannot be prevented by reducing the initial deployment error.

\subsection{Optimal deployment under isotropic dynamics}

The previous results quantify the expected lifetime of any fixed deployment.
A natural question is whether the deployment maximizing survival time must
coincide with the current optimum, or whether deliberately sacrificing
instantaneous fitness could increase future persistence. Under the isotropic
Gaussian model considered here, the answer is negative. The symmetry of both
the environmental dynamics and the acceptance region implies that every
displacement from the current optimum can only reduce the expected survival
time.

\begin{corollary}[Optimality of the current optimum]
\label{cor:no-displacement}
Both \(L(\epsilon_0)\) and \(U(\epsilon_0)\) are non-increasing in
\(\epsilon_0\). For \(U\), the monotonicity follows because for each \(m\), the non-central chi-square probability \(r_m(\epsilon_0)\) decreases with the non-centrality parameter \(\lambda_m(\epsilon_0)\), while \(q_m\) is independent of \(\epsilon_0\). Moreover, \(x=c_0\) maximizes the actual expected survival
time over all initially acceptable fixed solutions.
\end{corollary}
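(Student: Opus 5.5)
The plan is to treat the three claims in order: monotonicity of \(L\), monotonicity of \(U\), and optimality of \(x=c_0\) for the true expected survival time.

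\textbf{Monotonicity of \(L\).} This is immediate. The map \(\epsilon_0\mapsto(\delta^2-\epsilon_0^2)/(d\sigma^2)\) is decreasing on \([0,\delta)\), and taking the maximum with the constant \(1\) preserves non-increase.

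\textbf{Monotonicity of \(U\).} Fix \(m\). The quantities \(a_m\) and \(q_m\) do not involve \(\epsilon_0\), while \(\lambda_m(\epsilon_0)=\epsilon_0^2/(m\sigma^2)\) is increasing in \(\epsilon_0\). It therefore suffices to show that \(\lambda\mapsto F_{\chi'^2_d(\lambda)}(a)\) is non-increasing.

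\begin{itemize}
\item One route is Anderson's inequality, already used in Lemma~\ref{lem:block}. The map \(s\mapsto\Pr(\lVert sv+Z\rVert\le\delta)\) is non-increasing in \(s\ge0\) for a centered Gaussian \(Z\) and a symmetric convex ball. This is the standard corollary that Anderson's inequality gives monotonicity along rays, not only maximality at \(0\).
\item An alternative is the Poisson-mixture representation of the non-central chi-square. Its distribution function is \(\sum_j e^{-\lambda/2}(\lambda/2)^j/j!\,F_{\chi^2_{d+2j}}(a)\). Here \(F_{\chi^2_{d+2j}}(a)\) is decreasing in \(j\), and the Poisson family is stochastically increasing in its mean.
\end{itemize}

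Either way \(r_m(\epsilon_0)\), and hence \(U_m(\epsilon_0)\), is non-increasing in \(\epsilon_0\). An infimum over \(m\) of non-increasing functions is non-increasing.

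\textbf{Optimality of \(x=c_0\).} Bounds alone cannot show this, so I would work with the true quantity \(\mathbb E_y[\tau]=\sum_{k\ge0}\Pr_y(\tau>k)\), with \(y=x-c_0\). The goal is to show that each term \(\Pr_y(\tau>k)\) is maximized at \(y=0\).

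The key observation concerns the path \((Y_1,\dots,Y_k)=(y-S_1,\dots,y-S_k)\), where \(S_j=\sum_{i<j}\xi_i\).
\begin{itemize}
\item This path is a translate by \((y,\dots,y)\) of the centered Gaussian vector \((-S_1,\dots,-S_k)\) in \(\mathbb R^{dk}\).
\item That Gaussian vector has a symmetric, unimodal (log-concave) density.
\item The event \(\{\tau>k\}\) is exactly \(\{(Y_1,\dots,Y_k)\in B^k\}\), where \(B\) is the closed \(\delta\)-ball. The product set \(B^k\) is convex and centrally symmetric in \(\mathbb R^{dk}\).
\end{itemize}
Anderson's inequality applied in \(\mathbb R^{dk}\) then gives \(\Pr_y(\tau>k)\le\Pr_0(\tau>k)\) for every \(k\). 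Summing over \(k\) yields \(\mathbb E_y[\tau]\le\mathbb E_0[\tau]\). The ray version of Anderson's inequality even gives monotonicity of the true \(\mathbb E[\tau]\) in \(\epsilon_0\) along rays, and by rotational invariance in \(\epsilon_0\) itself.

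The main obstacle is exactly this lift. Lemma~\ref{lem:block} only uses endpoint events, whereas here the argument must see that the full path event is a symmetric convex set under a translated centered Gaussian. The potential subtlety is that the translation is the same \(y\) in every block coordinate. That is harmless: Anderson's inequality allows an arbitrary translation vector, here \((y,\dots,y)\).
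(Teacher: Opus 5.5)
Your proposal is correct and follows the paper's proof essentially step for step. $L$ is immediate; each $U_m$ inherits monotonicity from $r_m$ because $q_m$ does not depend on $\epsilon_0$; and optimality of $x=c_0$ comes from applying Anderson's inequality in $\mathbb{R}^{dn}$ to the path event $\{\tau>n\}$, viewed as a translate by $(Y_0,\dots,Y_0)$ of the symmetric convex product of $\delta$-balls, and then summing via the tail-sum formula. Your two justifications that $\lambda\mapsto F_{\chi'^2_d(\lambda)}(a)$ is non-increasing (Anderson along rays, or the Poisson-mixture argument) usefully fill in a step the paper only asserts.
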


\begin{proof}
The lower bound is immediate. For each \(m\), the non-central chi-square
probability \(r_m(\epsilon_0)\) decreases as its non-centrality parameter
increases, while \(q_m\) is independent of \(\epsilon_0\). Thus every
\(U_m\), and therefore their infimum, is non-increasing. For the actual
expectation, fix \(n\) and write
\(Z=(S_1,\ldots,S_n)\), where
\(S_k=\sum_{j=0}^{k-1}\xi_j\). The event
\(\{\tau>n\}\) is a translation, by
\((Y_0,\ldots,Y_0)\), of the symmetric convex set
\[
 K_n=\{(z_1,\ldots,z_n):\lVert z_k\rVert\le\delta
 \text{ for every }k\}.
\]
The vector \(Z\) is centered Gaussian. Anderson's inequality gives
\(\Pr_{Y_0}(\tau>n)\le\Pr_0(\tau>n)\). Summing this inequality over \(n\)
using the tail-sum formula proves the claim.
\end{proof}

This result clarifies the scope of the present theory. Under the isotropic
model, ROOT does not create a trade-off between instantaneous fitness and
future persistence: the current optimum is already the deployment that
maximizes the expected survival time. Consequently, the role of the analysis
is not to identify a better deployment than TMO, but to quantify how long the
best available deployment is expected to remain acceptable and whether that
lifetime satisfies operational requirements.

The conclusion is model-dependent rather than universal. Directional drift,
anisotropic or correlated environmental changes, asymmetric acceptance
regions, or multimodal landscapes may shift the survival-optimal deployment
away from the current optimum. Such extensions fall outside the present
analysis and constitute natural directions for future work.

\section{Computational validation and deployment analysis}
\label{sec:experiments}

This section evaluates the practical behavior of the proposed analytical
framework. After describing the common experimental protocol, we first
validate the analytical bounds over a broad factorial design. We then
examine their robustness through sensitivity analyses involving anisotropic
environmental dynamics and uncertainty in the estimated environmental
parameters. Next, we illustrate how the bounds can be used as a
post-optimization decision layer following evolutionary search. Finally, we
embed the framework within a sequential deployment scenario to demonstrate
its role across repeated environmental changes.

\subsection{Experimental settings}
\label{sec:experimental-settings}

Unless otherwise stated, all experiments use an acceptance radius
\(\delta=1\). The main validation considers the Cartesian product of
\(\sigma\in\{0.05,0.10,0.20\}\),
\(\epsilon_0/\delta\in\{0.1,0.3,0.5,0.7,0.9\}\), and
\(d\in\{2,5,10,20,50,100,200,500\}\), yielding 120 parameter
configurations. Each configuration is evaluated using \(30{,}000\)
independent Monte Carlo trajectories, with uncertainty reported as an
approximate 95\% normal confidence interval for the sample mean.

The parameter ranges cover slow, intermediate, and rapidly changing
environments; initial deployments from nearly optimal to close to the
acceptability threshold; and dimensions spanning the transition from
persistent survival to near-immediate failure. Unless stated otherwise, the
upper bound is evaluated by increasing the block length \(m\) until
\(U_m\) exceeds the best value already obtained, following the stopping
criterion discussed after Theorem~\ref{thm:upper}. All code, random seeds,
and raw experimental outputs are archived as described in the Code
Availability statement.

The decision-support, estimation-sensitivity, and sequential-redeployment
experiments additionally require a controlled generator of deployment
candidates. Throughout those studies we use a standard global-best particle
swarm optimizer (PSO)~\citep{kennedy1995particle} with Clerc--Kennedy
constriction~\citep{clerc2002particle}. The swarm optimizes only the current
environment \(f^{(0)}(x)=-\lVert x\rVert^2\) over
\([-5.12,5.12]^d\), with 30 particles, constriction coefficient
\(\chi=0.7298\), acceleration coefficients \(c_1=c_2=2.05\), and velocity
and position clipped to the search domain. Evaluation budgets of 30, 150,
600, and 3000 function evaluations are considered where relevant, and
thirty independent optimization runs are performed for each reported budget.
The PSO is used solely as a candidate generator; no claim of algorithmic
superiority is made. Experiment-specific protocols are stated in the
corresponding subsections below.

The computational study tests four theoretical expectations. First, the
expected survival time should scale as
\(\Theta(\sigma^{-2})\) in the low-noise regime. Second, increasing the
deployment error \(\epsilon_0\) should reduce the expected lifetime. Third,
the optimal block length \(m^*\) should approach one as the one-step drift
\(d\sigma^2\) increases. Finally, the impact of parameter uncertainty should
be greatest when a required deployment horizon lies close to one of the two
analytical bounds.

\subsection{Global validation and behavior of the analytical bounds}

The analytical interval \([L,U]\) is evaluated over the complete factorial
design. Each configuration is summarized by two metrics: the relative
interval width \(W=(U-L)/E_{\mathrm{MC}}\) and the relative position
\(R=(E_{\mathrm{MC}}-L)/(U-L)\). These quantities characterize the
informativeness of the analytical bracket and the location of the Monte
Carlo estimate within it.

\begin{figure}[ht]
  \centering
  \includegraphics[width=\textwidth]{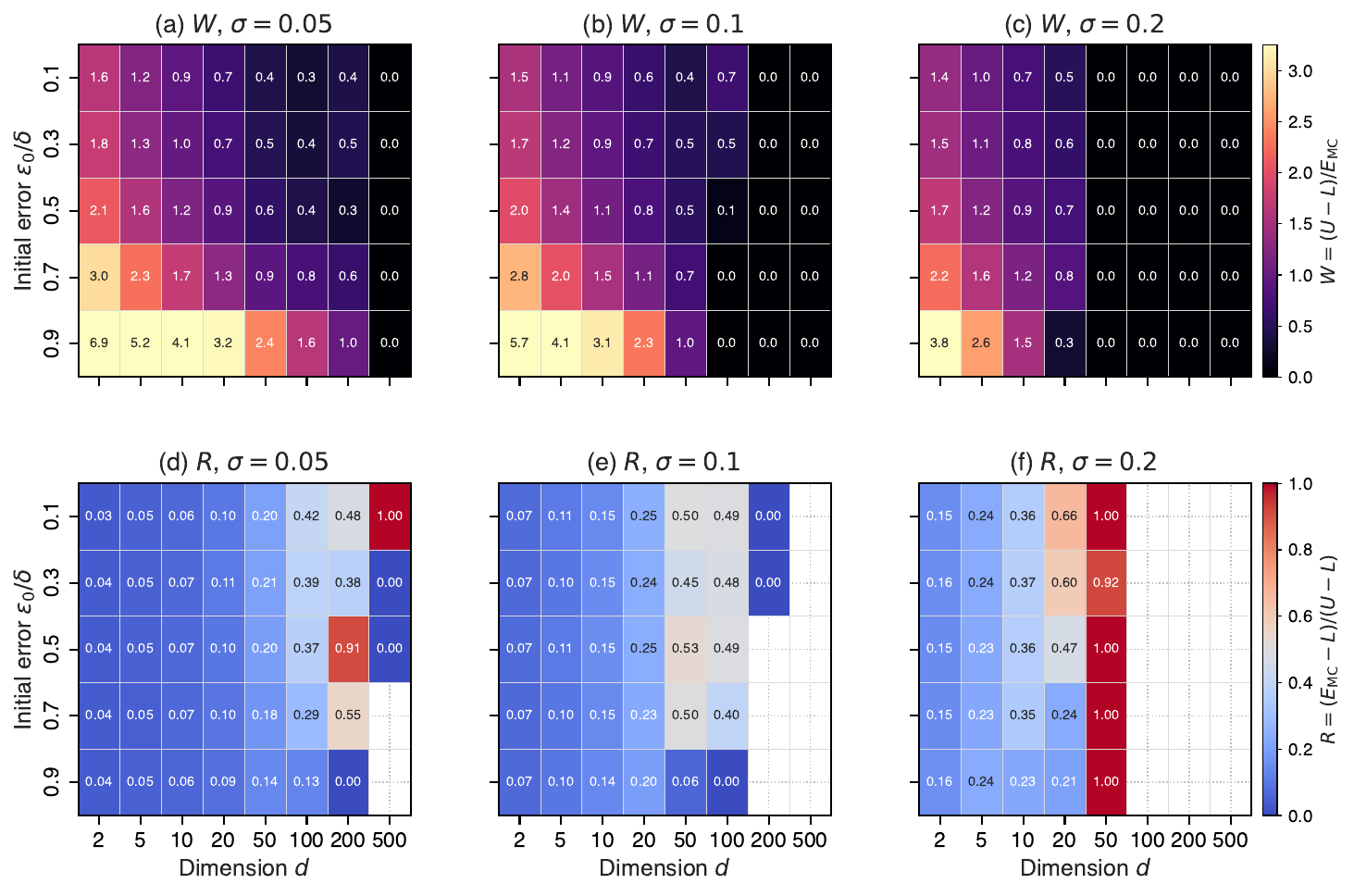}
  \caption{Practical quality of the analytical interval over all 120
  configurations. Top row: relative width
  \(W=(U-L)/E_{\mathrm{MC}}\) for
  \(\sigma=0.05\), \(0.10\), and \(0.20\) (panels a--c). Bottom row:
  relative position \(R=(E_{\mathrm{MC}}-L)/(U-L)\) on the same grid
  (panels d--f), with a diverging scale centred at \(0.5\). Cells with
  \(U=L\) leave \(R\) undefined (shown as ``--''). Colour scales are shared
  within each row; a few one-step means that sit marginally above \(U\) are
  clipped to \(R=1\) for display.}
  \label{fig:bounds-quality}
  \end{figure}

Figure~\ref{fig:bounds-quality} provides a global view of the analytical
interval. The relative width \(W\) exhibits a clear transition across the
parameter space. The interval is widest in low-dimensional, low-volatility
settings, particularly when the deployed solution starts close to the
acceptability boundary. In these regimes, the endpoint relaxation used to
derive the upper bound is necessarily conservative because trajectories may
leave and subsequently re-enter the acceptance region. As either the
dimension or the environmental volatility increases, the interval rapidly
contracts until it becomes almost indistinguishable from the Monte Carlo
estimate in the one-step regime.

The relative position \(R\) reveals a complementary pattern. For most configurations, the Monte Carlo mean lies substantially closer to the lower bound than to the upper one; the lower bound captures most of the
practically relevant information, while the upper bound remains intentionally
conservative. Only near the transition to immediate failure does the estimate
move toward the centre of the interval, and once the interval collapses
(\(U=L\)), the position metric becomes undefined. Across all configurations,
the Monte Carlo confidence intervals remain consistent with the analytical
interval. Five point estimates lie marginally above a numerically tight upper
bound in the one-step regime; their confidence intervals intersect
\([L,U]\), indicating deviations compatible with Monte Carlo sampling error
rather than with any systematic departure from the theory.

\begin{table}[ht]
  \caption{Selected validation results for
  \(\epsilon_0=0.5\delta\), \(\sigma=0.10\), and \(N=30{,}000\).
  The upper-bound column also reports its minimizing block length \(m^*\).}
  \label{tab:bounds}
  \centering
  \begin{tabular}{@{}rrrrr@{}}
  \toprule
  \(d\) & Lower & Monte Carlo mean & Upper & \(m^*\)\\
  \midrule
  2   & 37.500 & \(43.612\pm0.438\) & 123.637 & 46\\
  10  & 7.500  & \(8.985\pm0.051\)  & 17.241  & 12\\
  50  & 1.500  & \(2.107\pm0.005\)  & 2.640   & 2\\
  100 & 1.000  & \(1.068\pm0.003\)  & 1.138   & 1\\
  \bottomrule
  \end{tabular}
  \end{table}

Table~\ref{tab:bounds} provides detailed numerical values for the representative \(\epsilon_0=0.5\delta\) case, while Figure~\ref{fig:bounds} extends the comparison to other displacement settings. For low-dimensional problems (\(d=2\)), the lower bound captures the correct lifetime scale, whereas the upper bound remains conservative because endpoint containment is only a necessary condition for pathwise survival. As the dimension increases, the interval narrows substantially. By \(d=50\), the Monte Carlo estimate lies near the center of the interval and the optimal block length has decreased to \(m^*=2\). At \(d=100\), the analysis collapses to the one-step regime established in Proposition~\ref{prop:high-d} (\(m^*=1\)).

\begin{figure}[ht]
\centering
\includegraphics[width=0.8\textwidth]{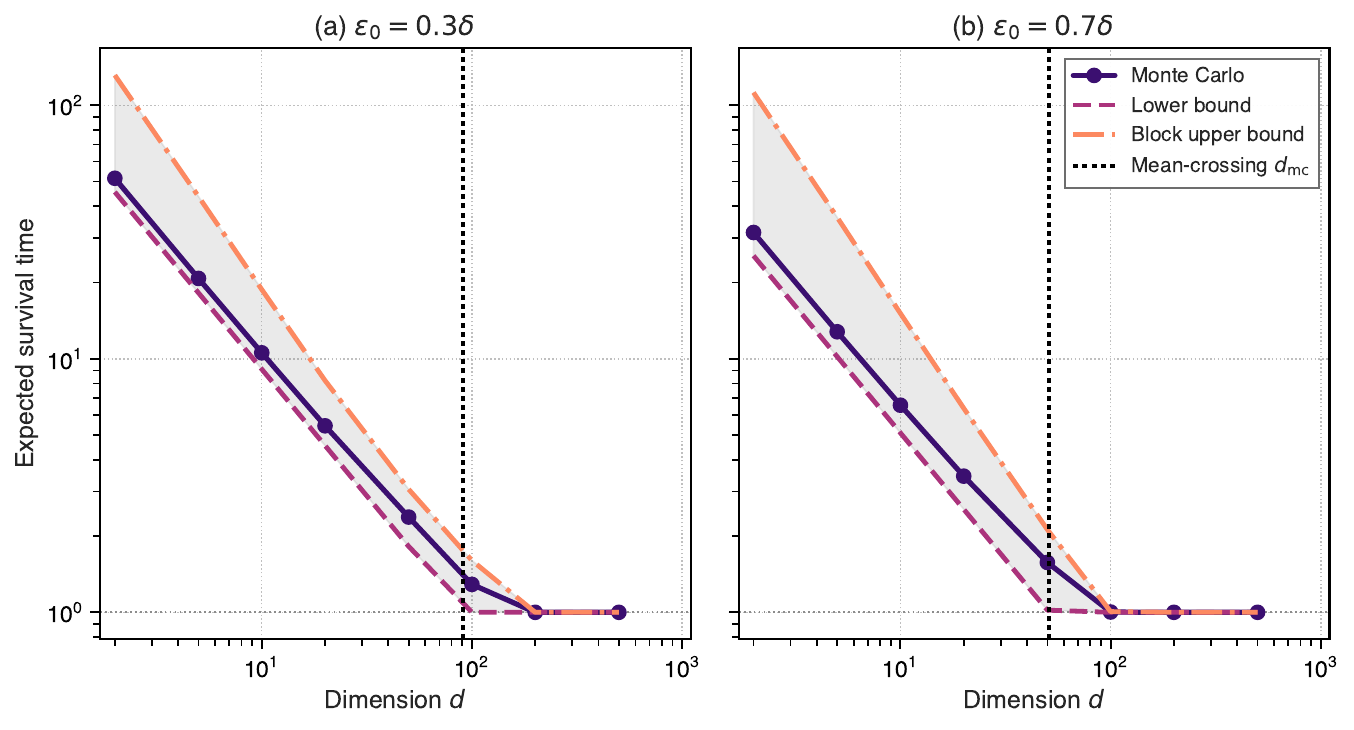}
\caption{Expected survival time as a function of dimension \(d\) on a
log-log scale (\(\sigma=0.10\), \(\delta=1\)). Panels (a) and (b)
correspond to initial displacements \(\epsilon_0=0.3\delta\) and
\(\epsilon_0=0.7\delta\). Solid lines with
markers: Monte Carlo mean over 30{,}000 trajectories, with a shaded 95\%
confidence band. Dashed and dotted lines: the lower bound~\eqref{eq:lower}
and the block upper bound~\eqref{eq:upper}; the shaded band between them
marks the analytical interval \([L,U]\). The dotted vertical line marks 
\(d_{\mathrm{mc}}=(\delta^2-\epsilon_0^2)/\sigma^2\), the dimension at 
which the deterministic one-step drift equals the initial squared safety 
margin. This quantity is not a theoretical threshold derived from 
Theorems~\ref{thm:lower} or~\ref{thm:upper}; it serves only as a visual 
reference for where the empirical survival curves depart from the 
approximate linear regime.}
\label{fig:bounds}
\end{figure}

Figure~\ref{fig:bounds} extends this comparison across the full dimensional
range. In both displacement settings, the empirical expectation follows the
analytical trend. Before the transition, the curves decay approximately
linearly on the log-log scale, reflecting the inverse-dimensional dependence
of the lower bound. As dimension increases, both the simulations and the
analytical interval progressively collapse towards the one-step limit
established in Proposition~\ref{prop:high-d}. The descriptive quantity
\(d_{\mathrm{mc}}=(\delta^2-\epsilon_0^2)/\sigma^2\) marks the dimension at
which the deterministic one-step drift equals the initial squared safety
margin; it consistently appears near the point where the empirical curves
depart from the approximately linear regime and begin converging towards
immediate exit.

The behavior of the minimizing block length \(m^*\) is shown in
Figure~\ref{fig:parameter-sensitivity}. As predicted by the analysis
in Section~\ref{sec:results}, \(m^*\) decreases monotonically with the
one-step drift \(d\sigma^2\): longer blocks become preferable when 
survival is governed by gradual accumulation, whereas the analysis 
automatically reverts to a one-step argument once environmental changes 
dominate.

\begin{figure}[t]
\centering
\includegraphics[width=0.6\textwidth]{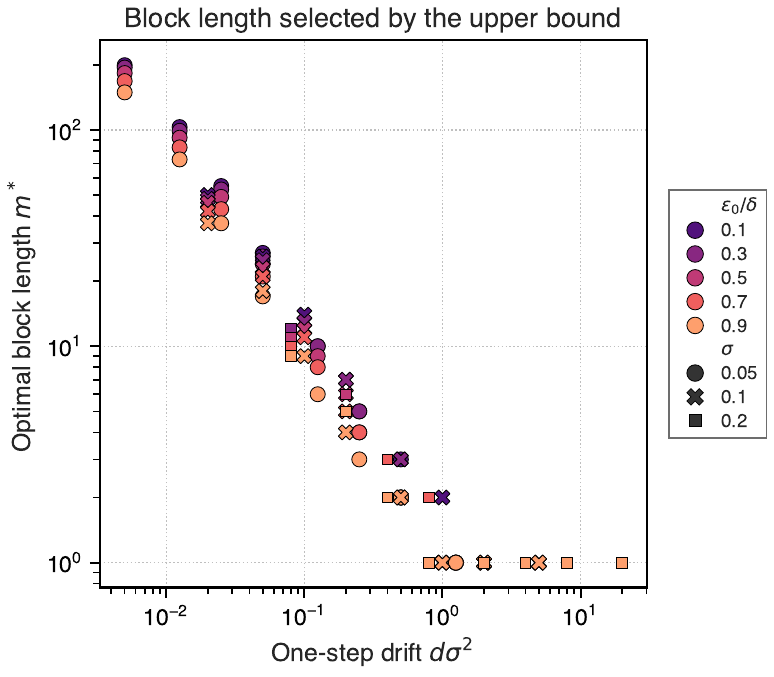}
\caption{Block length \(m^*\) selected by the upper bound as a function of
the one-step drift \(d\sigma^2\), pooled across all volatilities and
displacements.}
\label{fig:parameter-sensitivity}
\end{figure}

\subsection{Sensitivity analyses}

The theoretical results derived in Section~\ref{sec:results} rely on two
types of assumptions: assumptions about the environmental dynamics and
assumptions about the parameters supplied to the analytical decision layer.
To assess how strongly the practical conclusions depend on these inputs, we
consider two complementary sensitivity analyses. The first relaxes the
isotropy assumption while preserving the overall environmental variability,
whereas the second examines how errors in the estimated environmental
parameters affect the resulting deployment decisions.

\subsubsection{Sensitivity to anisotropic environmental noise}

The analytical results were derived under isotropic Gaussian drift, in which
all coordinates evolve with the same variance. To assess how strongly the
observed behavior depends on this assumption, we perform a controlled
sensitivity analysis by relaxing isotropy while preserving the total
environmental variability.

Specifically, the covariance matrix \(\sigma^2I_d\) is replaced by a
diagonal covariance matrix \(\Sigma\) whose coordinate variances are
proportional to values linearly spaced between 0.25 and 1.75 and then
normalized so that \(\operatorname{tr}(\Sigma)=d\sigma^2\). Matching the
trace preserves the expected one-step increase in squared distance, and
therefore the drift underlying the lower bound, while redistributing that
variability across different directions. The upper bound in
Theorem~\ref{thm:upper}, however, relies on isotropic Gaussian ball
probabilities and therefore is not expected to extend to this setting.

\begin{table}[t]
\caption{Isotropic versus trace-matched anisotropic dynamics
(\(\epsilon_0=0.5\delta\), \(\sigma_{\mathrm{eff}}=0.10\),
\(N=30{,}000\)). The last column gives the relative change of the anisotropic Monte Carlo mean with respect to the isotropic mean; all differences are within the Monte Carlo sampling variability (typical standard errors of the mean are about \(0.01\)–\(0.02\) in the last two rows, and about \(0.4\) for \(d=2\)).}
\label{tab:anisotropic}
\centering
\begin{tabular}{@{}rrrrr@{}}
\toprule
\(d\) & Lower~\eqref{eq:lower} & Isotropic \(\mathbb{E}[\tau]\) &
Anisotropic \(\mathbb{E}[\tau]\) & \(\Delta_{\%}\)\\
\midrule
2   & 37.500 & \(43.955\pm0.441\) & \(44.254\pm0.425\) & \(+0.68\%\)\\
10  & 7.500  & \(8.978\pm0.051\)  & \(8.979\pm0.047\)  & \(+0.01\%\)\\
50  & 1.500  & \(2.107\pm0.005\)  & \(2.108\pm0.004\)  & \(+0.04\%\)\\
100 & 1.000  & \(1.064\pm0.003\)  & \(1.053\pm0.003\)  & \(-1.09\%\)\\
\bottomrule
\end{tabular}
\end{table}

Table~\ref{tab:anisotropic} compares the isotropic and trace-matched
anisotropic dynamics for representative dimensions. The observed differences
in expected survival time remain small throughout, never exceeding
approximately \(1.1\%\). In particular, the anisotropic perturbation
produces only marginal increases in low and intermediate dimensions and a
slightly shorter lifetime in the highest-dimensional case. Across all
configurations, the drift-based lower bound remains below both Monte Carlo
estimates, as expected from its dependence solely on the total variance.

These results should be interpreted as a sensitivity analysis rather than as
an extension of the theoretical results. They do not justify applying the
upper bound to arbitrary covariance structures, since its derivation depends
essentially on isotropic Gaussian ball probabilities. They do suggest,
however, that the principal scaling captured by the lower bound is robust to
moderate redistributions of variance when the total environmental
variability is preserved. In other words, for the trace-matched perturbation
considered here, the dominant factor governing the expected deployment
lifetime is the overall drift magnitude rather than its distribution across
individual coordinates.

\subsubsection{Sensitivity to environmental parameter estimation}

The analytical bounds assume that the environmental parameters are known.
In practice, however, both the current optimum \(c_0\) and the
environmental volatility \(\sigma\) must typically be estimated from
observations. Since the deployment decision depends on these quantities, it
is important to assess how estimation errors propagate to the analytical
classification. Following the distinction between true and surrogate
environmental information discussed in the ROOT literature
\citep{jin2013framework,yazdani2024review}, we perturb only the parameters
supplied to the decision layer while keeping the simulated environmental
process unchanged.

The estimated centre is perturbed by isotropic Gaussian noise with expected
radial scales of \(0\), \(0.025\delta\), \(0.05\delta\), and
\(0.10\delta\). Independently, the estimated environmental volatility
receives zero-mean relative Gaussian noise with standard deviations of
\(0\), \(5\%\), \(10\%\), and \(20\%\). For each perturbation level, we
compare the deployment decision obtained from the perturbed parameters with
the decision obtained using the true parameters.

\begin{figure}[t]
\centering
\includegraphics[width=0.6\textwidth]{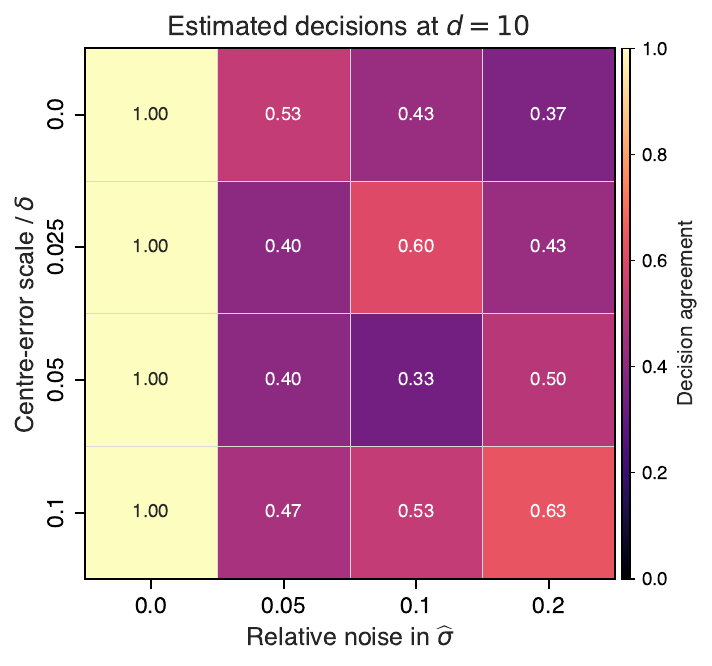}
\caption{Agreement between decisions made with true and perturbed
parameters for candidates from the PSO setup of
Section~\ref{sec:experimental-settings} (budget 3000) at \(d=10\).
Each heat-map cell aggregates 30 runs.}
\label{fig:estimation-sensitivity}
\end{figure}

Figure~\ref{fig:estimation-sensitivity} summarizes the agreement between the
deployment decisions obtained with the true and perturbed parameter values.
The dominant effect is not the magnitude of the perturbation itself but the
distance of the operating point from a decision boundary. At \(d=10\), where
the PSO candidates with budget 3000 lie close to the target horizon, the
agreement remains perfect when only the centre estimate is perturbed, since
the analytical classification is already \textit{undetermined}. In contrast, errors
in the estimated volatility alter both analytical bounds and therefore change
the classification more frequently, reducing the agreement to between 0.33
and 0.63 for the perturbation levels considered.

The behavior differs markedly in parameter regimes that are well separated
from the decision boundary. For example, at \(d=2\) the lower bound remains
comfortably above the required horizon, and every perturbation level retains
100\% agreement. The robustness of the analytical decision therefore depends 
primarily on the available margin to the relevant bound rather than on the 
absolute magnitude of the estimation errors.

These experiments should be interpreted as a sensitivity analysis rather
than as an uncertainty quantification procedure. The analytical theorems
remain conditional on the supplied parameter values, and replacing the true
parameters by point estimates does not account for estimation uncertainty.
Obtaining certified deployment decisions under uncertain environmental
parameters would require confidence regions for \(c_0\) and \(\sigma\),
followed by worst-case evaluation of the corresponding analytical bounds.

\subsection{Decision support after evolutionary search}

The analytical bounds are functions solely of the deployment error
\(\epsilon_0\), independent of its origin. They can therefore be applied
as a post-optimization decision layer on top of any search algorithm. To
illustrate this use, we employ the PSO of
Section~\ref{sec:experimental-settings} as a controlled generator of
deployment candidates with different optimization qualities. The purpose is
not to compare PSO with other optimizers, but to show how the analytical
interval translates the residual deployment error of a candidate into an
operational statement about its expected survival time. For each
optimization run, the best candidate is frozen and subsequently evaluated
over 3000 independent environmental trajectories. Future environments are
never exposed to the optimizer; they are generated only after optimization
has finished.

Given a required expected deployment horizon \(h=10\), the decision
framework of Section~\ref{sec:results} classifies each candidate according
to the analytical interval \([L,U]\). Candidates outside the acceptance
region (\(\epsilon_0\ge\delta\)) are reported separately as
\emph{rejected at deployment}, distinguishing failure to obtain an
acceptable solution from failure to guarantee the required operational
lifetime. Throughout this experiment, ``certified'' refers exclusively to
the analytical guarantees established in
Section~\ref{sec:results}; the chi-square probabilities
\(F_{\chi^2_d}\) and \(F_{\chi'^2_d(\lambda)}\) defined in
Section~\ref{sec:model} are evaluated in the log domain to avoid underflow.

\begin{figure}[t]
\centering
\includegraphics[width=0.65\textwidth]{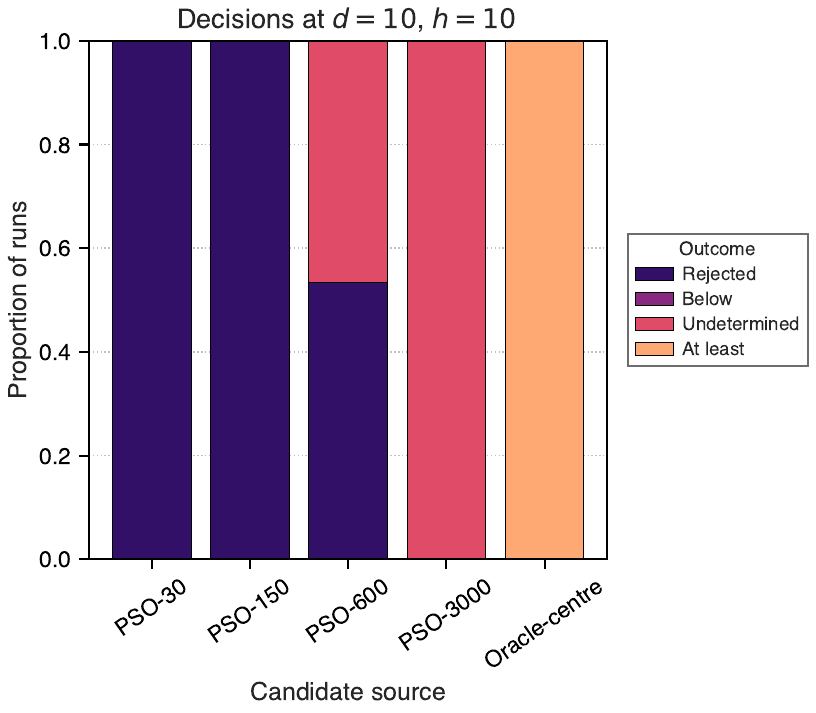}
\caption{Decision outcomes at \(d=10\) for a target expected horizon
\(h=10\). Rejection at deployment is separated from bound-based decisions
as the PSO evaluation budget increases; the oracle centre is included as a
reference.}
\label{fig:decision-outcomes}
\end{figure}

Figure~\ref{fig:decision-outcomes} illustrates how the analytical interval
changes the interpretation of optimization results. At \(d=10\), low
evaluation budgets frequently fail to produce candidates that are acceptable
at deployment. As the optimization budget increases, the deployment error
decreases and all candidates eventually satisfy the acceptance criterion.
However, acceptability alone does not imply that the required operational
lifetime is guaranteed. Even with 3000 evaluations, the empirical mean
survival exceeds ten environmental changes, but the lower bound remains just
below the required threshold
(\(\overline{L}=9.999<10\)). This near-boundary case was deliberately chosen to illustrate the decision boundary; it highlights how small changes in deployment error can shift the classification. The decision therefore falls into the 
undetermined region of the analytical framework (Section~\ref{sec:results}): 
the lower theorem cannot certify the horizon, even though simulation 
suggests it is often achieved. The oracle solution (\(\epsilon_0=0\)) 
highlights how narrow this decision boundary is, consistent with 
Corollary~\ref{cor:no-displacement}: under isotropic dynamics, eliminating 
the deployment error is both sufficient and necessary for optimal survival.

The complementary situation appears at \(d=50\). Even the oracle solution
cannot satisfy the required lifetime because the analytical upper bound is
\(U(0)=3.104<10\). The limitation is therefore environmental rather than
algorithmic: no optimizer capable only of improving the current solution can
achieve the requested deployment horizon under the assumed dynamics. In such
cases, the appropriate decision is not to continue searching, but to revise
the deployment requirements, increase the acceptance tolerance, reduce the
environmental volatility, or adopt a different environmental model or an
adaptive replacement strategy. This experiment illustrates the intended role
of the proposed framework as a decision-support layer that complements
optimization by distinguishing insufficient search effort from
fundamentally unattainable deployment objectives.

\subsection{Sequential deployment analysis}

The preceding experiments evaluate individual deployment decisions under
controlled parameter settings. In practice, however, deployment decisions
are made repeatedly as the environment evolves. The analytical framework is
therefore intended to operate not once, but throughout a sequence of
redeployments. This experiment illustrates that usage by embedding the
proposed decision layer within a longitudinal ROOT scenario and examining
how it guides successive deployment decisions over time.

The environment follows the same isotropic sphere model used throughout this
section (\(d=10\), \(\delta=1\), \(\sigma=0.10\)) over a horizon of
200 environmental changes. Two deployment policies are compared. The TMO
baseline performs a new optimization and redeployment after every
environmental change, irrespective of whether the incumbent solution
remains acceptable. In contrast, the
\(\mathrm{ROOT}^{S}_{Q}\) policy keeps the deployed solution until the
quality threshold is violated, at which point the PSO of
Section~\ref{sec:experimental-settings} with budget 3000 generates a
replacement candidate. After each redeployment,
the analytical bounds are recomputed from the new candidate's deployment error.
Since both policies use exactly the same optimizer and search
budget, the comparison isolates the effect of the deployment policy rather
than the optimization algorithm itself.

\begin{figure}[t]
\centering
\includegraphics[width=\textwidth]{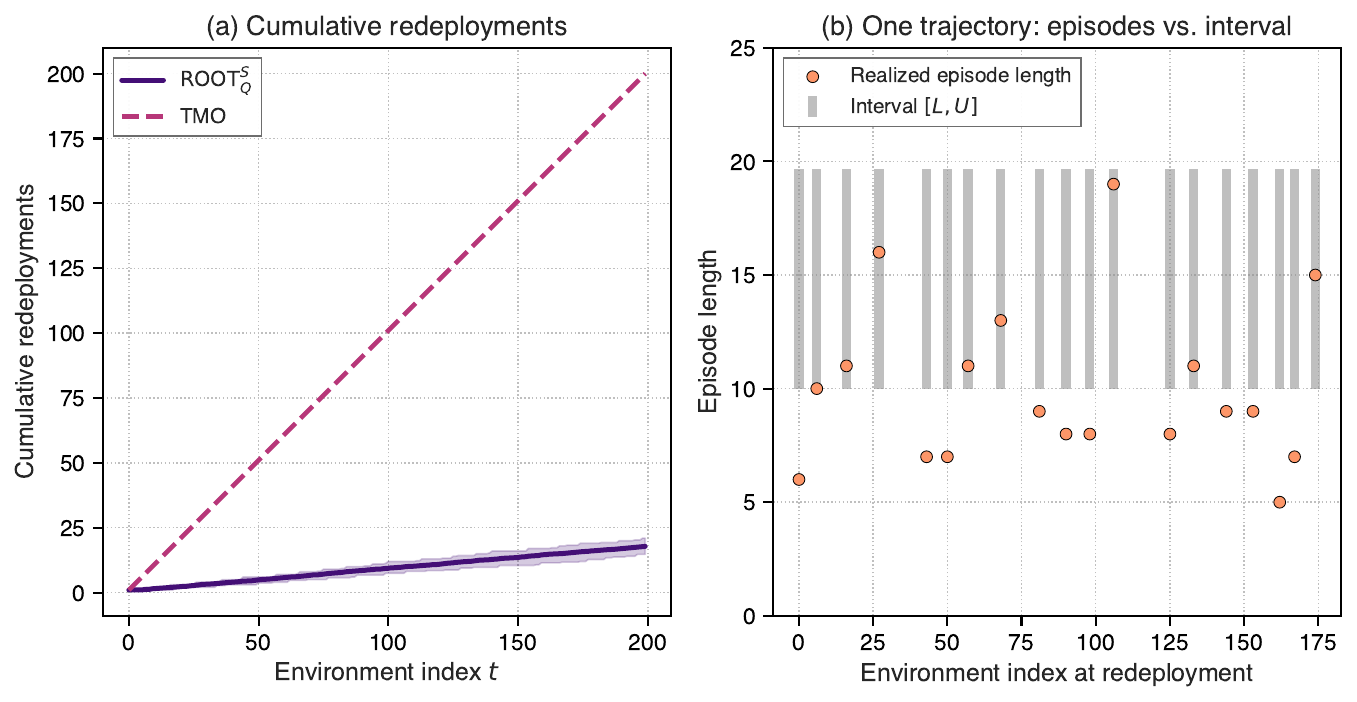}
\caption{A longitudinal illustration over 200 sequential environment
changes at \(d=10\), \(\sigma=0.10\), \(\delta=1\). (a) Cumulative number of
redeployments against environment index, comparing \(\mathrm{ROOT}^{S}_{Q}\)
(mean and 95\% band over 30 replicate trajectories) with TMO, which
redeploys at every change by construction. (b) One representative
trajectory's realized episode lengths under \(\mathrm{ROOT}^{S}_{Q}\)
(points) against the \((L,U)\) interval computed from the PSO
candidate of Section~\ref{sec:experimental-settings} at that redeployment
(grey bars).}
\label{fig:longitudinal}
\end{figure}

Figure~\ref{fig:longitudinal}(a) illustrates the cumulative deployment
behavior over time. Whereas TMO performs one redeployment after every
environmental change by construction, the proposed
\(\mathrm{ROOT}^{S}_{Q}\) policy intervenes only when the incumbent solution
becomes unacceptable. Averaged over 30 independent trajectories,
\(\mathrm{ROOT}^{S}_{Q}\) performs \(17.8\pm1.8\) redeployments during the
200-environment horizon, corresponding to an approximate 91\% reduction in
deployment events. The figure therefore makes explicit the operational
motivation underlying ROOT: environmental adaptation is achieved not by
continuous reoptimization, but by delaying expensive redeployments whenever
the current solution remains sufficiently effective.

Figure~\ref{fig:longitudinal}(b) follows one representative deployment
trajectory. After each redeployment, the analytical interval
\([L,U]\) is recomputed from the newly obtained deployment error and
compared with the realized duration of the subsequent deployment episode.
Across the 30 trajectories, 510 completed episodes were observed, of which
approximately 53\% had realized durations lying inside the corresponding
analytical interval.

This proportion should not be interpreted as an empirical coverage
probability. The analytical results bound the expected survival time rather
than individual realizations, and therefore no particular fraction of
episodes is expected to fall within the interval. Instead, the figure
illustrates how the analytical framework accompanies successive deployment
decisions, updating the expected operational lifetime after each
redeployment as the optimization error and the environmental history evolve.

Taken together, the longitudinal results complement the previous
experiments. The earlier sections established the correctness and practical
behavior of the analytical bounds for individual deployment decisions,
whereas the present experiment shows how those same bounds can be repeatedly
integrated into an ongoing deployment process. Rather than replacing the
optimizer, the proposed framework provides an analytical decision layer that
determines when continued deployment remains justified and when a new search
becomes operationally necessary.

\section{Concluding remarks}
\label{sec:concluding-remarks}

This work addresses a specific stage of robust optimization over time: the
period between deploying a solution and deciding whether it should be
replaced. We develop an analytical framework that relates the deployment
error of a fixed solution to its expected operational lifetime under
isotropic Gaussian environmental drift. Beyond the particular model studied
here, the analysis contributes to a broader understanding of deployment
persistence, a central concept underlying many ROOT problems and algorithms.

Our main contribution is a pair of rigorous bounds on expected survival
time. Together, they establish when a deployment horizon can be guaranteed,
when it is provably unattainable, and when the available analytical
information is insufficient to reach either conclusion. Beyond these
decision guarantees, the analysis reveals two fundamental properties of the
model: expected survival scales as \(\Theta(\sigma^{-2})\) in slowly
changing environments and collapses towards a single environmental change as
the dimension increases. The computational study confirms that these
behaviors remain visible across a broad range of parameter settings and
shows that the practical value of the lower and upper bounds depends
strongly on the operating regime.

More broadly, the results clarify the respective roles of optimization and
environmental dynamics. Improving the deployed solution can only reduce the
initial deployment error; it cannot compensate for unfavorable volatility,
dimension, or deployment requirements. In some regimes, the desired horizon
is unattainable even for the optimal deployment, implying that additional
search effort is fundamentally ineffective. Equally important, the
symmetric Gaussian model considered here does not generate a conflict
between present performance and future persistence: the current optimum is
also the survival-optimal fixed deployment. Any genuine trade-off between
TMO and ROOT therefore requires richer dynamics, predictive information, or
application-specific objectives.

The experiments also suggest that survival analysis can play a practical
decision-support role within ROOT. The bounds remain informative under
moderate departures from isotropy, identify situations in which parameter
uncertainty becomes operationally relevant, and provide interpretable
deployment assessments after optimization. In the longitudinal scenario,
this information translated into substantially fewer redeployments than a
policy that reoptimizes after every environmental change.

Several limitations remain. The analysis assumes a fixed deployed solution
and independent isotropic Gaussian environmental changes. Directional
drift, temporal dependence, heavy-tailed dynamics, non-spherical acceptance
regions, uncertain environmental parameters, and explicit switching costs
all fall outside the present theory. In addition, the analysis presumes that the optimizer is able to produce a candidate solution; we do not consider cases where the search fails entirely or where the estimation of the current optimum is subject to significant error. The decision framework is conditional on the supplied parameters, and our sensitivity analysis provides only a preliminary indication of how estimation errors affect the classification. Extending the framework to such
settings, together with replacing expectation-based guarantees by
risk-sensitive criteria or sequential decision policies, constitutes a
natural direction for future research.

Overall, we view this work as a first step towards a quantitative theory of
deployment persistence in ROOT. Survival time has long been used as a
descriptive measure of temporal robustness; the framework developed here
shows how it can also support deployment decisions before future
environments are observed. We hope that this perspective will help bridge
the gap between dynamic optimization and operational decision-making, and
serve as a foundation for more realistic deployment-aware ROOT methods.

\section*{Statements and Declarations}

\noindent\textbf{Funding.}
No funding information has been provided at this stage.

\noindent\textbf{Competing interests.}
The author declares no competing interests.

\noindent\textbf{Ethics approval and consent to participate.}
Not applicable.

\noindent\textbf{Consent for publication.}
Not applicable.

\noindent\textbf{Data availability.}
All generated numerical data needed to reproduce the tables and figures are
stored with the experimental code. A public archival URL will be added after acceptance.

\noindent\textbf{Code availability.}
The complete simulation and figure-generation code is available in the repository. A public archival URL and release
identifier will be added after acceptance.

\noindent\textbf{Author contributions.}
All authors equally contributed to the research and writing of the manuscript.

\bibliographystyle{unsrtnat}
\bibliography{references}

\end{document}